\documentclass[11pt]{article}
\usepackage[margin=0.8in]{geometry}
\usepackage{amsmath,amssymb,amsthm,booktabs,hyperref,microtype}
\usepackage[T1]{fontenc}
\usepackage{lmodern}
\hypersetup{colorlinks=true,linkcolor=black,urlcolor=blue,citecolor=black}
\newtheorem{theorem}{Theorem}
\newtheorem{corollary}{Corollary}
\newcommand{\TV}{\operatorname{TV}}
\newcommand{\E}{\mathbb{E}}
\title{A Bethe-Ansatz Source of Early Token Correlations\\
\large A marginal-preserving construction for parallel proposals}
\author{Abraham Chachamovits\\ENTRUST AI}
\date{14 October 2026}
\begin{document}
\maketitle

\begin{abstract}
Parallel token proposals need a joint law, not merely accurate token marginals. We construct coordinate-Bethe-ansatz laws from the periodic XXZ chain and compare them with their maximum-entropy pairwise projections at identical one- and two-site moments. In an eight-site, three-excitation sector, symmetry makes the full probability-law difference one-dimensional, determined by a contiguous three-site moment $\delta$. Along the Bethe branch, this gap changes sign and vanishes exactly at $\Delta=(3+\sqrt3)/2$, where the eigenstate law becomes pairwise. A nonlinear two-token readout preserves separate token marginals yet exposes a distance $2|\delta|$; a direct three-token readout preserves \emph{all} one- and two-token marginals yet exposes $4|\delta|$. At $\Delta=1.5$ these deterministic binary distances are $0.02654573$ and $0.05309145$. We give an exact target-alignment formula and prove a strict speculative accepted-prefix advantage throughout an unrestricted open neighborhood of a constructed target. These are finite-law existence results, not measured language-model or wall-clock gains.
\end{abstract}

\section{Question and construction}

Parallel token prediction can produce dependent proposals in one model call \cite{ptp}. The question here is narrower: can an integrable spin state's correlation structure generate a token dependence that is invisible to a control matching its one- and two-site spin statistics, while preserving lower-order proposed-token marginals? We also ask how the detectable difference depends on the XXZ anisotropy, so the Bethe scattering structure contributes more than the provenance of one example. The coordinate Bethe ansatz originates with Bethe's spin-chain analysis \cite{bethe}. Our construction does not require language itself to obey a spin-chain Hamiltonian.

Let $Z=(Z_0,\ldots,Z_{L-1})\in\{0,1\}^{L}$ denote occupation variables in the $M$-excitation sector $\Omega_{L,M}=\{z:\sum_jz_j=M\}$. A normalized real eigenvector $\psi$ of the periodic spin-$\frac12$ XXZ Hamiltonian induces an ordinary classical probability law
\begin{equation}\label{eq:born}
 P(z)=|\psi(z)|^2,\qquad z\in\Omega_{L,M}.
\end{equation}
The comparator $Q$ is the maximum-entropy law on the same sector with the same $\E Z_i$ and $\E Z_iZ_j$ for all $i<j$. This is a strong pairwise control: any difference between $P$ and $Q$ begins beyond their matched one- and two-site moments. Pairwise maximum-entropy models have a standard statistical-physics interpretation \cite{roudi}.

Choose distinct sites $i,j,k$ and define two latent readouts
\begin{equation}\label{eq:readout}
 A=Z_i,\qquad B=Z_jZ_k.
\end{equation}
$A$ and $B$ each have the same marginal under $P$ and $Q$: the former is a one-site event, the latter a two-site event. But their joint event $\{A=B=1\}$ measures the three-site moment. Write
\begin{equation}\label{eq:delta}
 \delta=\E_P[Z_iZ_jZ_k]-\E_Q[Z_iZ_jZ_k].
\end{equation}
If $\delta\ne0$, the two arms have different \emph{two-token} dependence even though their latent comparator matches every pair of original sites. This order reduction comes from the nonlinear readout $B=Z_jZ_k$; it is not a claim that the pairwise spin moments differ.

\section{Marginal-preserving token law}

For proposed token positions $t=1,2$, let $K_t^0$ and $K_t^1$ be probability distributions on any finite vocabulary $\mathcal V_t$. Under either arm, sample $(A,B)$ jointly, then sample $T_1\sim K_1^A$ and $T_2\sim K_2^B$ conditionally independently. The channels $K_t^b$ are identical for the two arms. Let $\mathsf P$ and $\mathsf Q$ denote the resulting token-pair laws and put $d_t=\TV(K_t^0,K_t^1)$.

\begin{theorem}[Early-token correlation transfer]\label{thm:transfer}
If $P$ and $Q$ have identical one- and two-site moments, the individual token laws of $T_1$ and $T_2$ are identical across arms. Moreover,
\begin{equation}\label{eq:factor}
 \mathsf P-\mathsf Q
 =\delta\,(K_1^0-K_1^1)\otimes(K_2^0-K_2^1),
 \qquad
 \TV(\mathsf P,\mathsf Q)=2|\delta|d_1d_2.
\end{equation}
Thus a nonzero three-site moment gap produces a nonzero two-token joint gap whenever both channels distinguish their binary inputs.
\end{theorem}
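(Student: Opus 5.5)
The plan is to reduce everything to the joint law of the binary pair $(A,B)$ and then push it through the fixed product channel. Write $p_{ab}=\Pr_P(A=a,B=b)$ and $q_{ab}=\Pr_Q(A=a,B=b)$ for $a,b\in\{0,1\}$. Since $A=Z_i$ and $B=Z_jZ_k$ are $\{0,1\}$-valued, their joint law is determined by three numbers: $\E A=\E Z_i$, $\E B=\E Z_jZ_k$, and $\E AB=\E Z_iZ_jZ_k$ (using $Z_i^2=Z_i$). Explicitly,
\[
p_{11}=\E_P AB,\quad p_{10}=\E_P A-\E_P AB,\quad p_{01}=\E_P B-\E_P AB,\quad p_{00}=1-\E_P A-\E_P B+\E_P AB,
\]
and similarly for $q$. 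By the hypothesis of matched one- and two-site moments, $\E_P A=\E_Q A$ and $\E_P B=\E_Q B$, so
\[
p_{ab}-q_{ab}=(-1)^{a+b}\delta .
\]
That is, $p-q=\delta\,(e_1-e_0)\otimes(e_1-e_0)$ as a signed measure on $\{0,1\}^2$.

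Next I will apply the channel. By construction, $\mathsf P(t_1,t_2)=\sum_{a,b}p_{ab}K_1^a(t_1)K_2^b(t_2)$, and the same formula with $q$ gives $\mathsf Q$, because the channels are shared between arms. Linearity in the $(A,B)$ law then yields
\[
\mathsf P-\mathsf Q=\sum_{a,b}(-1)^{a+b}\delta\,K_1^a\otimes K_2^b=\delta\,(K_1^1-K_1^0)\otimes(K_2^1-K_2^0),
\]
which equals the stated form since the two sign flips cancel. Marginalizing over $t_2$ kills the right-hand factor, because $\sum_{t_2}(K_2^1-K_2^0)(t_2)=0$. The $T_1$ laws therefore coincide, and symmetrically so do the $T_2$ laws. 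This can also be seen directly: the law of $T_1$ depends only on the law of $A$, which is matched.

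For the total variation, I use $\TV=\tfrac12\|\cdot\|_1$ together with the fact that the $\ell_1$ norm of a tensor product of signed measures factorizes: $\|f\otimes g\|_1=\|f\|_1\|g\|_1$. Since $\|K_t^0-K_t^1\|_1=2d_t$, this gives
\[
\TV(\mathsf P,\mathsf Q)=\tfrac12|\delta|\cdot 2d_1\cdot 2d_2=2|\delta|d_1d_2 .
\]
The final sentence of the theorem follows immediately, since this quantity is nonzero whenever $\delta\ne0$ and $d_1,d_2>0$.

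No step is a genuine obstacle. The only points needing care are the conventions. First, I must confirm that the matched "two-site moments" include $\E Z_jZ_k$ for the specific pair $(j,k)$, which holds because all $i<j$ pairs are matched. Second, I must keep the normalization $\TV=\tfrac12\ell_1$ consistent so that the constant comes out as $2$ rather than $4$ or $1$.
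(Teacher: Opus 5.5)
Your proposal is correct and follows essentially the same route as the paper. Matching the one- and two-site moments forces the signed $(A,B)$ table difference to be $(-1)^{a+b}\delta$; pushing this through the common channels gives a rank-one tensor product whose $\ell_1$ norm factorizes as $\tfrac12|\delta|(2d_1)(2d_2)$. The only cosmetic difference is in the marginals: you sum out the zero-mass factor $K_2^1-K_2^0$, whereas the paper writes the marginal directly as the arm-independent mixture $(1-r_1)K_1^0+r_1K_1^1$, which you also mention as an alternative.
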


\begin{proof}
Let $r_1=\Pr(A=1)$ and $r_2=\Pr(B=1)$, common to both arms. Their $2\times2$ latent probability tables differ only through $\Pr(A=1,B=1)$, which differs by $\delta$. Fixing both row and column sums forces the signed table difference, in $(0,0),(0,1),(1,0),(1,1)$ order, to be $(\delta,-\delta,-\delta,\delta)$. Pushing this difference through the common channels gives the tensor product in \eqref{eq:factor}. The first marginal is $(1-r_1)K_1^0+r_1K_1^1$, and the second is $(1-r_2)K_2^0+r_2K_2^1$, both independent of the arm. Finally, the $\ell_1$ norm of a tensor product factors, so its half-norm is $\tfrac12|\delta|(2d_1)(2d_2)=2|\delta|d_1d_2$.
\end{proof}

There is also a direct readout that requires no nonlinear site map: take the ordered binary triple $(Z_i,Z_j,Z_k)$ itself. Denote its laws under $P,Q$ by $P^{(3)},Q^{(3)}$. For each token position $t=1,2,3$, optionally apply the same arm-independent conditional emission channel $K_t^{b}$ to binary input $b$; write $\mathsf P^{(3)},\mathsf Q^{(3)}$ for the resulting laws and $d_t=\TV(K_t^0,K_t^1)$.

\begin{theorem}[Direct three-token correlation transfer]\label{thm:triple}
Under the same moment-matching assumption, $P^{(3)}$ and $Q^{(3)}$ agree on every one- and two-token marginal. For every $b=(b_1,b_2,b_3)\in\{0,1\}^3$,
\begin{equation}\label{eq:triplecells}
 P^{(3)}(b)-Q^{(3)}(b)=\delta\prod_{t=1}^{3}(2b_t-1),
 \qquad \TV(P^{(3)},Q^{(3)})=4|\delta|.
\end{equation}
The same lower-order marginals match after common emission channels, and
\begin{equation}\label{eq:triplefactor}
 \mathsf P^{(3)}-\mathsf Q^{(3)}
 =\delta\bigotimes_{t=1}^{3}(K_t^1-K_t^0),\qquad
 \TV(\mathsf P^{(3)},\mathsf Q^{(3)})=4|\delta|d_1d_2d_3.
\end{equation}
\end{theorem}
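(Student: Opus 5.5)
The plan is to work entirely with the signed difference $D=P^{(3)}-Q^{(3)}$ on the cube $\{0,1\}^3$, and to show that the moment-matching hypothesis pins $D$ down up to one scalar. A function on $\{0,1\}^3$ is determined by its values on the eight "moment" functionals $\E[\prod_{t\in S}Z_{i_t}]$ for $S\subseteq\{1,2,3\}$. These are related to the cell probabilities by Möbius inversion; equivalently, the monomials $\prod_{t\in S}b_t$ form a basis. By hypothesis, all moments with $|S|\le 2$ agree under $P$ and $Q$. The case $S=\emptyset$ is total mass $1$, the singletons are $\E Z_i$, and the pairs are $\E Z_iZ_j$, using that $i,j,k$ are distinct. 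So $D$ annihilates every monomial of degree at most $2$ and pairs with $b_1b_2b_3$ to give exactly $\delta$ from \eqref{eq:delta}.

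First I will check that $f(b)=\prod_t(2b_t-1)$ has exactly these properties. Expanding $f$, its only degree-three coefficient is $8\,b_1b_2b_3$. The Fourier characters $\chi_S(b)=\prod_{t\in S}(2b_t-1)$ are orthogonal under counting measure, with $\sum_b\chi_S\chi_T=8\cdot 1_{S=T}$. Since each $\chi_S$ with $|S|\le2$ lies in the span of the monomials of degree at most $2$, $\sum_b f(b)\,g(b)=0$ for every such monomial $g$. Moreover,
\[
\sum_b f(b)\,b_1b_2b_3=f(1,1,1)=1.
\]
Hence $\delta f$ satisfies all eight linear conditions that $D$ satisfies. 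Because the conditions come from a basis, uniqueness gives $D=\delta f$, which is the first identity in \eqref{eq:triplecells}.

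The marginal statement follows immediately. A one- or two-token marginal of $D$ sums $f$ over at least one free coordinate, and $\sum_{b_t}(2b_t-1)=0$, so every such marginal vanishes. For the distance, $|D(b)|=|\delta|$ at all eight cells, so $\TV=\tfrac12\cdot 8|\delta|=4|\delta|$.

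For the emission part, I push $D$ through the product kernel $\bigotimes_t K_t^{b_t}$. This gives
\[
\delta\sum_b\prod_t(2b_t-1)K_t^{b_t}=\delta\bigotimes_t\sum_{b_t}(2b_t-1)K_t^{b_t}=\delta\bigotimes_t(K_t^1-K_t^0),
\]
where the middle equality holds because the sum factorizes coordinatewise. Lower-order marginals of the output still vanish, since each factor $K_t^1-K_t^0$ has total mass zero. The $\ell_1$ norm of a tensor product of signed measures is the product of the norms, $\prod_t 2d_t$. So
\[
\TV=\tfrac12|\delta|\,8d_1d_2d_3=4|\delta|d_1d_2d_3,
\]
as in \eqref{eq:triplefactor}, mirroring the last step of the proof of Theorem~\ref{thm:transfer}.

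The only step needing care is the uniqueness claim. It is just a dimension count, $8$ cells against $8$ independent moment functionals, but it must be stated cleanly, for instance via the invertible triangular change of basis between indicator cells and monomials. Everything else is routine.
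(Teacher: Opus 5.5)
Your proof is correct and is essentially the paper's argument run in the dual direction. The paper expands each cell indicator as $\prod_t[(1-b_t)+(2b_t-1)Z_{i_t}]$ and reads off $\prod_t(2b_t-1)$ as the only coefficient multiplying a moment that differs between $P$ and $Q$. You instead verify that $\delta f$ satisfies the same eight moment conditions as $P^{(3)}-Q^{(3)}$ and then invoke invertibility; your marginal, emission-factorization and tensor-norm steps match the paper's.

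One small fix is needed. Orthogonality of $f$ to the low-degree monomials requires those monomials to lie in the span of the characters $\chi_S$ with $|S|\le 2$, which is the reverse of the inclusion you state. This follows from your orthogonality observation plus a dimension count, or more directly from the free-coordinate cancellation $\sum_{b_t}(2b_t-1)=0$ that you already use for the marginals.
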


\begin{proof}
The indicator of cell $b$ is $\prod_{t=1}^3[b_tZ_{i_t}+(1-b_t)(1-Z_{i_t})]$ with $(i_1,i_2,i_3)=(i,j,k)$. In its expansion, all terms of degree at most two have identical expectations under $P,Q$. The remaining coefficient of $Z_iZ_jZ_k$ is $\prod_t(2b_t-1)$, proving \eqref{eq:triplecells}. Each of eight absolute cell differences is $|\delta|$. Summing the signed cells against the product channels gives \eqref{eq:triplefactor}; applying any common channel to an equal lower-order marginal preserves its equality. Finally $\|K_t^1-K_t^0\|_1=2d_t$, giving half the product norm $4|\delta|\prod_td_t$.
\end{proof}

Both results allow prescribed full-vocabulary token marginals. For each readout bit with common success probability $r_t$, given a desired marginal $\mu_t$ with at least two positive entries, choose a nonzero bounded function $s_t$ with $\sum_x\mu_t(x)s_t(x)=0$. For any sufficiently small $\varepsilon>0$, define
\begin{equation}\label{eq:channel}
 K_t^0(x)=\mu_t(x)[1-\varepsilon r_ts_t(x)],\qquad
 K_t^1(x)=\mu_t(x)[1+\varepsilon(1-r_t)s_t(x)].
\end{equation}
Both kernels normalize, remain nonnegative for sufficiently small $\varepsilon$, and satisfy $(1-r_t)K_t^0+r_tK_t^1=\mu_t$ exactly. Nonconstant $s_t$ gives $d_t>0$. Small amplitude is a sufficient construction, not a general limit: if a subset $E_t\subseteq\mathcal V_t$ has $\mu_t(E_t)=r_t$, the disjoint-support channels $K_t^1=\mu_t(\cdot\mid E_t)$ and $K_t^0=\mu_t(\cdot\mid E_t^c)$ preserve $\mu_t$ and attain $d_t=1$. Thus any prescribed nondegenerate marginals admit an observable correlation, and compatible marginals can retain the entire latent gap. These are classical emission mechanisms.

\section{Exact verification and target alignment}

The token-law distinction has a precise acceptance consequence for a defined target family. Let a two-token autoregressive target have joint law $R$. The two proposal laws $\mathsf P,\mathsf Q$ share first marginal $m$; let the target first marginal be $r$, which may differ from $m$. Use ordinary exact speculative acceptance $\min\{1,R_t(x\mid h)/q_t(x\mid h)\}$ and the standard residual correction on rejection \cite{leviathan}. Let $L(U,R)$ be the expected number of \emph{accepted proposed tokens} in a two-token block. Conditional distributions at zero-mass prefixes can be specified arbitrarily where their prefactor below is zero.

\begin{theorem}[Exact target-alignment identity]\label{thm:alignment}
For any such target $R$, without a restriction on its first marginal,
\begin{equation}\label{eq:alignment}
 L(\mathsf P,R)-L(\mathsf Q,R)=\sum_x\min\{m(x),r(x)\}
 \left[\TV\bigl(\mathsf Q(\cdot\mid x),R(\cdot\mid x)\bigr)
 -\TV\bigl(\mathsf P(\cdot\mid x),R(\cdot\mid x)\bigr)\right].
\end{equation}
\end{theorem}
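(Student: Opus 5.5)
I will compute $L(U,R)$ for a general proposal $U\in\{\mathsf P,\mathsf Q\}$ with first marginal $m$ and conditionals $U(\cdot\mid x)$. Then I will observe that every term not involving the second-token conditionals is identical for the two arms, so it cancels in the difference.

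The basic fact is the standard per-token property of exact speculative acceptance. When a token is proposed from $q$ and accepted with probability $\min\{1,p(x)/q(x)\}$, the probability that the proposal $x$ is drawn and accepted is $\min\{q(x),p(x)\}$. Hence the total acceptance probability is $\sum_x\min\{q(x),p(x)\}=1-\TV(q,p)$.

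The accepted-prefix count decomposes as
\[
L(U,R)=\Pr(\text{first accepted})+\Pr(\text{first and second accepted}).
\]
I will compute the two terms in order.

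\begin{enumerate}
\item The first token is proposed from $m$ and verified against the target first marginal $r$. This gives $\Pr(\text{first accepted})=\sum_x\min\{m(x),r(x)\}$, which is common to both arms because $\mathsf P$ and $\mathsf Q$ share $m$ by Theorem~\ref{thm:transfer}.
\item The second token is examined only if the first is accepted. Conditioned on the event that $T_1=x$ was proposed and accepted, which has probability $\min\{m(x),r(x)\}$, the second proposal is drawn from $U(\cdot\mid x)$. This holds because the proposal block was sampled jointly from $U$ and acceptance of $T_1$ depends only on $T_1$ and independent uniform randomness. The second proposal is then verified against $R(\cdot\mid x)$, since the accepted prefix is exactly $x$. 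Its acceptance probability is therefore $1-\TV(U(\cdot\mid x),R(\cdot\mid x))$.
\item Summing over $x$ gives
\[
L(U,R)=2\sum_x\min\{m(x),r(x)\}-\sum_x\min\{m(x),r(x)\}\,\TV\bigl(U(\cdot\mid x),R(\cdot\mid x)\bigr).
\]
\end{enumerate}

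Subtracting this expression for $U=\mathsf P$ and $U=\mathsf Q$ cancels the common $2\sum_x\min\{m,r\}$ term and yields \eqref{eq:alignment}.

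The step needing the most care is the conditioning in the second step. I must check that accepting $T_1$ does not bias the conditional law of $T_2$. This holds because the acceptance coin for position $1$ is a function of $T_1$ and an independent uniform only. Two further points need stating:
\begin{itemize}
\item On rejection of $T_1$, the residual-corrected replacement token does not count as an accepted proposed token, and the block ends. So no $\TV$ term arises from the residual branch, which is what allows $r\neq m$ without changing the argument.
\item At prefixes with $m(x)=0$ or $r(x)=0$, the prefactor $\min\{m(x),r(x)\}$ vanishes. The arbitrary specification of conditionals there is therefore harmless, as the statement allows.
\end{itemize}
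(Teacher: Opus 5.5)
Your proposal is correct and follows the paper's proof: both write $L(U,R)=\sum_x a_x\bigl[2-\TV\bigl(U(\cdot\mid x),R(\cdot\mid x)\bigr)\bigr]$, where $a_x=\min\{m(x),r(x)\}$ is the probability of proposing and accepting first token $x$, and then cancel the arm-independent first-token terms. Your remarks on why accepting $T_1$ does not bias the conditional law of $T_2$, and on the residual branch, spell out steps that the paper's three-line proof leaves implicit.
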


\begin{proof}
For either proposal the joint probability of proposing and accepting first token $x$ is $a_x=\min\{m(x),r(x)\}$. Given that accepted prefix, the chance of accepting the second proposal from $U$ is $1-\TV(U(\cdot\mid x),R(\cdot\mid x))$. The first-token contributions cancel between arms, yielding \eqref{eq:alignment}.
\end{proof}

\begin{corollary}[Constructed target and robustness]\label{cor:accept}
Set $R=\mathsf P$. The expected number of accepted proposed tokens out of two is $2$ for the XXZ-derived arm and $2-D$ for the matched pairwise arm, where
\begin{equation}\label{eq:gain}
 D=\TV(\mathsf P,\mathsf Q)=2|\delta|d_1d_2.
\end{equation}
For any target $R$ with $\TV(R,\mathsf P)\le\eta$, even when its first marginal differs, the advantage is at least $D-5\eta$, and hence is positive on the unrestricted open ball $\TV(R,\mathsf P)<D/5$. If $R$ has the same first-token marginal, the sharper bound $D-2\eta$ applies, positive for $\eta<D/2$. Both are sufficient, not necessarily optimal, radii.
\end{corollary}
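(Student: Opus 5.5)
The plan is to specialize Theorem~\ref{thm:alignment}, first to $R=\mathsf P$ and then to a perturbed target, bounding everything by the triangle inequality for conditional total variation. The one identity used repeatedly is the conditional decomposition of total variation for two joint laws with a common first marginal $m$:
\[
\TV(U,V)=\sum_x m(x)\,\TV\bigl(U(\cdot\mid x),V(\cdot\mid x)\bigr).
\]

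\emph{Exact case $R=\mathsf P$.} With $R=\mathsf P$, every proposed token from the $\mathsf P$ arm is accepted, because the ratio $R_t/q_t$ is identically $1$. Hence $L(\mathsf P,\mathsf P)=2$. For the $\mathsf Q$ arm, the first marginals coincide ($r=m$), so $a_x=m(x)$. The argument in the proof of Theorem~\ref{thm:alignment} then gives
\[
L(\mathsf Q,\mathsf P)=\sum_x m(x)\bigl[2-\TV(\mathsf Q(\cdot\mid x),\mathsf P(\cdot\mid x))\bigr].
\]
By the decomposition, this equals $2-\TV(\mathsf P,\mathsf Q)$. Theorem~\ref{thm:transfer} identifies this distance as $2|\delta|d_1d_2$.

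\emph{Robust case.} Write $a_x=\min\{m(x),r(x)\}$ and abbreviate $P_x=\mathsf P(\cdot\mid x)$, and similarly $Q_x$, $R_x$. The triangle inequality $\TV(Q_x,R_x)\ge\TV(Q_x,P_x)-\TV(P_x,R_x)$ inserted into \eqref{eq:alignment} gives
\[
L(\mathsf P,R)-L(\mathsf Q,R)\ge\sum_x a_x\TV(Q_x,P_x)-2\sum_x a_x\TV(P_x,R_x).
\]

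For the first sum, replace $a_x$ by $m(x)$, which yields exactly $D$. The error is at most $\sum_x(m(x)-a_x)$, since each conditional total variation is at most $1$. This error equals $\TV(m,r)\le\TV(\mathsf P,R)\le\eta$, because marginalization is a contraction. So the first sum is at least $D-\eta$.

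For the second sum, which I expect to be the only delicate step, write
\[
a_xP_x(y)=\mathsf P(x,y)-(m(x)-a_x)P_x(y),\qquad a_xR_x(y)=R(x,y)-(r(x)-a_x)R_x(y).
\]
Summing $|a_xP_x(y)-a_xR_x(y)|$ over $y$ then gives at most
\[
\sum_y|\mathsf P(x,y)-R(x,y)|+(m(x)-a_x)+(r(x)-a_x).
\]
Summing over $x$ bounds the total by $2\eta+\|m-r\|_1\le 4\eta$. Halving shows $\sum_x a_x\TV(P_x,R_x)\le 2\eta$. Combining the two sums gives the advantage bound $D-\eta-4\eta=D-5\eta$, which is positive whenever $\eta<D/5$.

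\emph{Same first marginal.} If $r=m$, then $a_x=m(x)$, so the first sum is exactly $D$ with no loss. By the conditional decomposition, the second sum equals $\TV(\mathsf P,R)\le\eta$. This gives the sharper bound $D-2\eta$, positive for $\eta<D/2$. Neither radius is claimed optimal, since each step uses a crude bound of conditional total variation by $1$ or the triangle inequality.
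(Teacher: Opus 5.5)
Your proof is correct and follows the paper's argument. You specialize the alignment identity, lower-bound the bracket by $d_x-2e_x$ via the triangle inequality, and control the two losses by $\TV(m,r)\le\eta$ and $\eta+\TV(m,r)\le 2\eta$, with the $r=m$ case handled by the conditional decomposition of total variation. The only cosmetic difference is in bounding $\sum_x a_x\TV(P_x,R_x)$: the paper first uses $a_x\le r_x$ and then adds and subtracts $m_x\mathsf P(y\mid x)$, whereas you subtract both excess masses $m(x)-a_x$ and $r(x)-a_x$ directly, which gives the same bound.
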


\begin{proof}
At the center $R=\mathsf P$, both first-token laws agree with the target, so $L(U,R)=2-\TV(U,R)$; this gives the stated lengths. For the unrestricted bound put $a_x=\min(m_x,r_x)$, $d_x=\TV(\mathsf P(\cdot\mid x),\mathsf Q(\cdot\mid x))$, $e_x=\TV(\mathsf P(\cdot\mid x),R(\cdot\mid x))$, and $\eta=\TV(\mathsf P,R)$. The bracket in \eqref{eq:alignment} is at least $d_x-2e_x$ by two triangle inequalities. As $D=\sum_xm_xd_x$,
\begin{equation}\label{eq:bounds}
 \sum_x(m_x-a_x)d_x\le\TV(m,r)\le\eta,\qquad
 \sum_xa_xe_x\le\sum_xr_xe_x\le\eta+\TV(m,r)\le2\eta.
\end{equation}
For the middle inequality, add and subtract $m_x\mathsf P(y\mid x)$ inside $r_x\mathsf P(y\mid x)-r_xR(y\mid x)$, sum absolute values over $(x,y)$, and divide by two. Substitution in \eqref{eq:alignment} gives $D-5\eta$. When $r=m$, the first loss in \eqref{eq:bounds} is zero and $\sum_xm_xe_x=\TV(\mathsf P,R)=\eta$, giving $D-2\eta$.
\end{proof}

The identity also specifies when the advantage reverses: the target's weighted conditional laws must be closer to one arm than the other. In particular, for $R_\lambda=(1-\lambda)\mathsf Q+\lambda\mathsf P$, $0\le\lambda\le1$, the gain is $(2\lambda-1)D$ and crosses zero at $\lambda=1/2$.

For the direct three-token laws of Theorem \ref{thm:triple}, set the autoregressive target to $R=\mathsf P^{(3)}$. Both proposal arms match the target's first two-token \emph{joint} prefix, so those two proposals are always accepted. Exact verification of the third then gives
\begin{equation}\label{eq:threeaccept}
 L(\mathsf P^{(3)},R)=3,\qquad
 L(\mathsf Q^{(3)},R)=3-4|\delta|d_1d_2d_3.
\end{equation}
Indeed, average the conditional third-token overlap over their common first-two-token prefix; its deficit is $\TV(\mathsf P^{(3)},\mathsf Q^{(3)})$. The favorable centers are constructed. No particular language model has been shown to lie in these target neighborhoods. A flexible classical proposal can reproduce either Bethe-derived token law exactly; the comparisons isolate the specified pairwise-moment-matched control.

\section{The XXZ branch and its correlation gap}

We instantiate the construction in a periodic chain with $L=8$, $M=3$, exchange coefficient $J=1$, and variable anisotropy $\Delta$; the first numerical instance uses $\Delta=1.5$:
\begin{equation}\label{eq:H}
 H(\Delta)=-\sum_{x=0}^{7}\left(S_x^xS_{x+1}^x+S_x^yS_{x+1}^y+
 \Delta S_x^zS_{x+1}^z\right),\qquad S_{8}^{a}=S_0^{a}.
\end{equation}
On the 56 occupation configurations, the diagonal entry at $z$ is $-\Delta\sum_x(z_x-\tfrac12)(z_{x+1}-\tfrac12)$; each allowed nearest-neighbor exchange has off-diagonal entry $-\tfrac12$. At $\Delta=1.5$, diagonalizing the seven-dimensional translation-invariant sector gives a normalized eigenvector at energy $-1.3898755836515198$. This state also has an explicit coordinate Bethe solution. Set $k_1=-\kappa$, $k_2=0$, $k_3=\kappa$, where $\kappa=0.3333918699484193$. The last digits of these two values are refined with 75-digit arithmetic using the orbit matrix \eqref{eq:orbitH} and $E=\Delta-1-2\cos\kappa$. Define the two-body scattering factor
\begin{equation}\label{eq:S}
 S(k,q)=-\frac{1+e^{i(k+q)}-2\Delta e^{ik}}
                 {1+e^{i(k+q)}-2\Delta e^{iq}}.
\end{equation}
The periodic Bethe equations $e^{8ik_a}=\prod_{b\ne a}S(k_a,k_b)$ have maximum complex residual below $3\times10^{-15}$ in double-precision evaluation. For $x_1<x_2<x_3$, construct the coordinate wavefunction
\begin{equation}\label{eq:BA}
 \psi_{\rm BA}(x_1,x_2,x_3)=\sum_{\pi\in\mathfrak S_3}A_\pi
   e^{i\sum_{a=1}^3 k_{\pi(a)}x_a},\qquad
 A_\pi=\prod_{\substack{a<b\\\pi^{-1}(a)>\pi^{-1}(b)}}S(k_b,k_a).
\end{equation}
After normalization, $|\psi_{\rm BA}|^2$ agrees with the diagonalized eigenvector's squared amplitudes to less than $10^{-15}$ in maximum absolute configuration error. Its probabilities define $P$. At $\Delta=1.5$ these are real scattering momenta, not a complex bound-state string. A translation-invariant maximum-entropy law $Q$ is fitted on the same 56 configurations using pair counts at four separations. Their sum is fixed by $M=3$, so only \emph{three} couplings are independent. All 28 pair moments are checked individually.

The Hamiltonian eigen-residual for this state is $1.59\times10^{-15}$; the maximum pair-moment error for this comparator is $4.1\times10^{-16}$. For $A=Z_0$ and $B=Z_1Z_2$, the numerical tables are:
\begin{center}
\begin{tabular}{ccrr}
\toprule
$A$ & $B$ & XXZ-derived $P$ & matched pairwise $Q$\\
\midrule
0&0&0.5306538300&0.5173809665\\
0&1&0.0943461700&0.1076190335\\
1&0&0.3277643839&0.3410372474\\
1&1&0.0472356161&0.0339627526\\
\bottomrule
\end{tabular}
\end{center}
Both arms have $\Pr(A=1)=0.375$ and $\Pr(B=1)\approx0.1415817861$. The triple-moment gap is $\delta\approx0.0132728634970$; with deterministic binary channels $d_1=d_2=1$, the two-token total variation distance is $D\approx0.0265457269941$. For the target $R=\mathsf P$, Corollary \ref{cor:accept} gives expected accepted lengths $2$ and approximately $1.973454273006$, respectively. An unrestricted sufficient target radius is $D/5\approx0.005309145399$; within the fixed-first-marginal family the stronger radius is $D/2=|\delta|$. For the direct $(Z_0,Z_1,Z_2)$ readout, all three one-token and all three pair-token marginals match, while $\TV(P^{(3)},Q^{(3)})\approx0.0530914539881$. With the constructed target $R=P^{(3)}$, accepted lengths are $3$ and approximately $2.946908546012$. These are accepted \emph{proposed tokens per block}, not throughput. The decimal values have the reported numerical tolerances.

\subsection{Why the adjacent triple is complete here}

The XXZ probability law and its pairwise projection are invariant under rotations and reflections of the ring. Reflection commutes with $H(\Delta)$ and preserves the translation-invariant sector, since it conjugates one-site translation to its inverse. At each tabulated $\Delta$ and at $\Delta_*$, the tracked state is the only one at its energy in that sector, so $|\psi|^2$ is reflection-invariant. The maximum-entropy projection inherits this symmetry because its constraints do. The 56 configurations form five dihedral orbits. In the order represented by occupied sites $(012),(013),(014),(024),(025)$, their sizes are $(8,16,16,8,8)$. With spins $s_x=2z_x-1$, let $F_d(z)=\sum_{x=0}^{7}s_xs_{x+d\,({\rm mod}\,8)}$ for $d=1,2,3$. Each $F_d$ equals four times the occupied-pair count at separation $d$, minus four; the separation-4 pair count is fixed by these counts and $M=3$. The five orbit representatives have $(F_1,F_2,F_3)$ equal to
\begin{equation}\label{eq:orbits}
 (4,0,-4),\quad (0,0,0),\quad(0,-4,0),\quad
 (-4,4,-4),\quad(-4,0,4).
\end{equation}
The $4\times5$ matrix whose rows are $1,F_1,F_2,F_3$ has rank four and null vector $(1,-2,0,0,1)$. Thus for any two dihedrally invariant laws with equal one- and two-site moments, their \emph{orbit mass} difference is a scalar multiple of this vector. The event $Z_0Z_1Z_2=1$ occurs with probability $1/8$ within the first orbit and zero within the other four. Consequently,
\begin{equation}\label{eq:orbitgap}
 m^P-m^Q=\delta(8,-16,0,0,8),\qquad
 \TV(P,Q)=16|\delta|.
\end{equation}
At $\Delta=1.5$, the full 56-state gap is therefore $0.21236581595$. Among the 56 choices of site triples, 32 have moment gap $+\delta$ or $-\delta$ and 24 have zero gap. We choose $(0,1,2)$ because it is a contiguous, nearest-neighbor cluster and its moment is a complete coordinate for the matched-law difference under this symmetry. It is not a uniquely favorable site choice.

\subsection{Scattering dependence and an exact cancellation point}

We track the same translation-invariant eigenstate as $\Delta$ varies and reconstruct it from $k=(-\kappa,0,\kappa)$ and the scattering factor \eqref{eq:S}. Its eigenenergy obeys $E(\Delta)=\Delta-1-2\cos\kappa$. The nonzero-momentum real branch reaches $\kappa=0$ at $\Delta=8/5$: differentiating the two sides of $e^{8i\kappa}=S(\kappa,0)S(\kappa,-\kappa)$ at zero gives $8i$ and $i3\Delta/(\Delta-1)$, respectively. Across this coalescence, $\kappa$ becomes imaginary. The coordinate formula is singular at the exact coalescence, so that point is omitted from the numerical table.

\begin{center}
\begin{tabular}{crrr}
\toprule
$\Delta$ & $\kappa$ & $\delta(\Delta)$ & $2|\delta(\Delta)|$\\
\midrule
1.10&$0.805762$&$\phantom{-}0.01401556$&0.02803112\\
1.30&$0.600797$&$\phantom{-}0.01452557$&0.02905114\\
1.50&$0.333392$&$\phantom{-}0.01327286$&0.02654573\\
1.80&$-0.448597i$&$\phantom{-}0.00840462$&0.01680924\\
2.25&$-0.772132i$&$\phantom{-}0.00133555$&0.00267110\\
2.50&$-0.893929i$&$-0.00133120$&0.00266240\\
\bottomrule
\end{tabular}
\end{center}
For the listed points, the reconstructed Born probabilities agree with the diagonalized eigenstates to $6\times10^{-15}$ per configuration, all pair moments of each fitted control agree to $5\times10^{-15}$, and the absolute Bethe-equation residual is below $10^{-9}$. The gap is nonmonotone; the scattering solution alone does not imply that stronger anisotropy increases it.

Because $\delta$ changes sign between $\Delta=2.25$ and $\Delta=2.5$, continuity of the tracked branch forces a zero on that interval; the additional result is its exact location and an explicit pairwise form. At $\Delta_*=(3+\sqrt3)/2$, the five orbit-constant eigenvector amplitudes are proportional to $(-1,1,\sqrt3,1,1)$ and have eigenenergy $-(1+\sqrt3)/2$. Their squared, normalized \emph{per-configuration} probabilities are $(1,1,3,1,1)/88$ on the five orbits. Using the features in \eqref{eq:orbits}, this is exactly a pairwise exponential law with coefficients
\begin{equation}\label{eq:star}
 (\theta_1,\theta_2,\theta_3)=
 \left(-\frac{\log3}{8},-\frac{\log3}{4},-\frac{\log3}{8}\right).
\end{equation}
For clarity, in the ordered orbit-constant amplitude basis the restriction of $H(\Delta)$ is
\begin{equation}\label{eq:orbitH}
 \begin{pmatrix}
 -\Delta&-1&0&0&0\\
 -1/2&-1/2&-1/2&-1/2&0\\
 0&-1/2&-1/2&-1/2&-1/2\\
 0&-1&-1&\Delta&-1\\
 0&0&-1&-1&\Delta-1
 \end{pmatrix}.
\end{equation}
Although the orbit basis is not orthonormal, this matrix acts on the five per-configuration amplitudes. At $\Delta_*$, multiplying \eqref{eq:orbitH} by $(-1,1,\sqrt3,1,1)^\mathsf T$ yields $-(1+\sqrt3)/2$ times that vector. Substituting \eqref{eq:star} verifies its pairwise representation on every orbit. Hence $P_{\Delta_*}=Q_{\Delta_*}$, $\delta(\Delta_*)=0$, and this particular Bethe-derived token advantage vanishes exactly. Along the tracked branch the higher-order gap varies, and it disappears at this exact anisotropy under the matched pairwise projection.

The numerical calculations and validation checks in this section and for Theorems \ref{thm:triple}--\ref{thm:alignment} can be reproduced from the accompanying ancillary code and data files; see the ancillary \texttt{README.md} for run instructions.

\section{Interpretation and research boundary}

The positive result is a constructive bridge: a coordinate Bethe state supplies a parameter-dependent higher-order classical law; the gap varies along the XXZ Bethe branch, is complete under the sector's dihedral symmetry, and vanishes at an exact anisotropy. A nonlinear readout transfers that gap to an early two-token dependence with fixed marginals; a direct readout carries it to three tokens while retaining all pair-token marginals. Exact speculative verification gives a strict difference for constructed targets and, in the two-token case, an unrestricted neighborhood. The state at $\Delta=1.5$ has real scattering momenta; complex momenta arise further along the tracked branch. Neither regime is a physical model of language.

This construction is not a statement that text is a one-dimensional physical spin chain, nor that rejection sampling is a projective quantum measurement. Its representational result alone does not imply faster decoding: acceptance depends on alignment with the actual target conditionals, while serving speed also depends on proposal cost. A separate language-model study with strong classical controls and measured target-verified acceptance would be needed for a performance claim. Prior Born-machine sequence models and trainable token measurements provide broader modeling context \cite{miller,hou}; our result isolates this particular XXZ-derived, pairwise-moment-controlled causal readout.


{\small
\begin{thebibliography}{9}
\bibitem{bethe} H. Bethe. Zur Theorie der Metalle I: Eigenwerte und Eigenfunktionen der linearen Atomkette. \emph{Zeitschrift f\"ur Physik} 71, 205--226 (1931).
\bibitem{leviathan} Y. Leviathan, M. Kalman, and Y. Matias. Fast Inference from Transformers via Speculative Decoding. \emph{ICML}, PMLR 202:19274--19286 (2023). \url{https://proceedings.mlr.press/v202/leviathan23a.html}
\bibitem{ptp} F. Draxler, J. Will, F. M. Sofian, T. Karaletsos, S. Singh, and S. Mandt. Parallel Token Prediction for Language Models. \emph{ICLR} (2026). \url{https://openreview.net/pdf?id=AGJomYSrUG}
\bibitem{roudi} Y. Roudi, S. Nirenberg, and P. E. Latham. Pairwise Maximum Entropy Models for Studying Large Biological Systems: When They Can Work and When They Can't. \emph{PLoS Computational Biology} 5(5):e1000380 (2009). \url{https://doi.org/10.1371/journal.pcbi.1000380}
\bibitem{miller} J. Miller, G. Rabusseau, and J. Terilla. Tensor Networks for Probabilistic Sequence Modeling. \emph{AISTATS}, PMLR 130:3079--3087 (2021). \url{https://proceedings.mlr.press/v130/miller21a.html}
\bibitem{hou} W. Hou, M. Li, and Y.-Z. You. Sequential learning on a Tensor Network Born machine with Trainable Token Embedding. arXiv:2311.05050v2 (2025). \url{https://arxiv.org/abs/2311.05050}
\end{thebibliography}
}
\end{document}